\documentclass[a4paper]{article}

\usepackage{amsmath}
\usepackage{amsthm}
\usepackage{amsfonts}
\usepackage{graphicx}
\usepackage{booktabs}
\usepackage{footnote}
\makesavenoteenv{tabular}
\makesavenoteenv{table}

\newtheorem{prop}{Proposition}

\newcommand{\vectorfmt}[1]{\mathbf{#1}}

\thispagestyle{empty}
\pagestyle{myheadings}
\begin{document}

\author{Katarzyna Janocha$^1$, Wojciech Marian Czarnecki$^{2,1}$ \\
\small $^1$Faculty of Mathematics and Computer Science,\\
\small Jagiellonian University, Krakow, Poland \\
\small $^2$DeepMind, London, UK\\
\small e-mail: {\it kasiajanocha@gmail.com,  lejlot@google.com}
}
\date{}

\title{{\large\bf On Loss Functions for Deep Neural Networks\\ in Classification}}

\maketitle

\abstract{
Deep neural networks are currently among the most commonly used classifiers.
Despite easily achieving very good performance, one of the best selling points of these models
is their modular design -- one can conveniently adapt their architecture to specific needs, change connectivity patterns,
attach specialised layers, experiment with a large amount of activation functions,
normalisation schemes and many others. While one can find impressively wide spread of various
configurations of almost every aspect of the deep nets, one element is, in authors' opinion, underrepresented -- 
while solving classification problems, vast majority of papers and applications simply use log loss.
In this paper we try to investigate how particular choices of loss functions affect deep models and their learning dynamics,
as well as resulting classifiers robustness to various effects. We perform experiments on classical datasets, as well
as provide some additional, theoretical insights into the problem. In particular we show that $\mathcal{L}_1$ and $\mathcal{L}_2$ losses
are, quite surprisingly, justified classification objectives for deep nets, by providing probabilistic interpretation in terms of expected misclassification. We also introduce two losses which are not typically used as deep nets objectives and show that they are viable 
alternatives to the existing ones.
}


\section{Introduction}

For the last few years the Deep Learning (DL) research has been rapidly developing.
It evolved from tricky pretraining routines~\cite{larochelle2009exploring} to a highly modular, customisable framework for building
machine learning systems for various problems, spanning from image recognition~\cite{krizhevsky2012imagenet}, voice recognition and
synthesis~\cite{oord2016wavenet} to complex AI systems~\cite{silver2016mastering}. One of the biggest advantages of DL is enormous flexibility in designing
each part of the architecture, resulting in numerous ways of putting priors over data inside the model itself~\cite{larochelle2009exploring},
finding the most efficient activation functions~\cite{clevert2015fast} or learning algorithms~\cite{kingma2014adam}.
However, to authors' best knowledge, most of the community still keeps one element nearly completely fixed -- when it comes
to classification, we use log loss (applied to softmax activation of the output of the network). In this paper we try to address
this issue by performing both theoretical and empirical analysis of effects various loss functions have on the training of deep nets.

It is worth noting that Tang et al.~\cite{tang2013deep} showed that well fitted hinge loss
can outperform log loss based networks in typical classification tasks. Lee et al.~\cite{lee2015deeply} used squared hinge loss for classification
tasks, achieving very good results. From slightly more theoretical perspective Choromanska et al.~\cite{choromanska2015loss} also considered $\mathcal{L}_1$ loss as a deep net objective.
However, these works seem to be exceptions, appear in complete separation from one another, and usually do not focus on any effect of the loss function but the final performance.
Our goal is to show these losses in a wider context, comparing one another under various criteria and provide insights into when -- and why -- one should use them.

\begin{table}[h]
\centering \caption{List of losses analysed in this paper. $\mathbf{y}$ is true label as one-hot encoding, $\mathbf{\hat y}$ is true label as +1/-1 encoding, $\mathbf{o}$ is the output of the last layer of the network, $\cdot^{(j)}$ denotes $j$th dimension of a given vector, and $\sigma(\cdot)$ denotes probability estimate.}

 \begin{tabular}{lll}
 \toprule
 symbol & name & equation \\
 \midrule
 $\mathcal{L}_1$ & \small L$_1$ loss & $\|\mathbf{y} - \mathbf{o}\|_1$\\
 $\mathcal{L}_2$ &  \small L$_2$ loss & $\|\mathbf{y} - \mathbf{o}\|_2^2$\\
 $\mathcal{L}_1 \circ \sigma$ &  \small expectation loss & $\|\mathbf{y} - \sigma(\textbf{o})\|_1$\\
 $\mathcal{L}_2 \circ \sigma$ &  \small regularised expectation loss\footnote{See Proposition~\ref{prop:l1exp}}& $\|\mathbf{y} - \sigma(\textbf{o})\|_2^2$\\
 $\mathcal{L}_\infty \circ \sigma$ &  \small Chebyshev loss & $\max_j |\sigma(\mathbf{o})^{(j)} - \mathbf{y}^{(j)}|$ \\
 hinge &  \small hinge~\cite{tang2013deep} (margin) loss& $\sum_{j} \max(0, \tfrac{1}{2} - \mathbf{\hat y}^{(j)}\mathbf{o}^{(j)}) $\\
 hinge$^2$ &  \small squared hinge (margin) loss& $\sum_{j} \max(0, \tfrac{1}{2} - \mathbf{\hat y}^{(j)}\mathbf{o}^{(j)})^2 $\\
 hinge$^3$ &  \small cubed hinge (margin) loss& $\sum_{j} \max(0,\tfrac{1}{2} - \mathbf{\hat y}^{(j)}\mathbf{o}^{(j)})^3 $\\
 log &  \small log (cross entropy) loss & $-\sum_{j}  \mathbf{y}^{(j)}\log \sigma(\mathbf{o})^{(j)} $\\
 log$^2$ &  \small squared log loss & $- \sum_{j} [\mathbf{y}^{(j)}\log \sigma(\mathbf{o})^{(j)}]^2 $\\
 tan &  \small Tanimoto loss & $\tfrac{-\sum_j \sigma(\mathbf{o})^{(j)} \mathbf{y}^{(j)}}{\|\sigma(\mathbf{o})\|_2^2+\|\mathbf{y}\|_2^2-\sum_j\sigma(\mathbf{o})^{(j)} \mathbf{y}^{(j)}}$ \\
 D$_\mathrm{CS}$ &  \small Cauchy-Schwarz Divergence~\cite{czarnecki2015maximum} & 
 $-\log \tfrac{\sum_j \sigma(\mathbf{o})^{(j)} \mathbf{y}^{(j)}}{\|\sigma(\mathbf{o})\|_2 \|\mathbf{y}\|_2}$ \\
 \bottomrule
 \end{tabular}
 \label{tab:losses}
\end{table}
This work focuses on 12 loss functions, described in Table~\ref{tab:losses}.
Most of them appear in deep learning (or more generally -- machine learning) literature, however some in slightly different context than a classification loss.
In the following section we present new insights into theoretical properties of a couple of these losses and then provide experimental evaluation of resulting models' properties, including the effect on speed of learning, final performance, input data and label noise robustness as well as convergence for simple dataset under limited resources regime.

\section{Theory}
%

Let us begin with showing interesting properties of $\mathcal{L}_p$ functions, typically considered as purely regressive losses, which should not be used in classification.
$\mathcal{L}_1$ is often used as an auxiliary loss in deep nets to ensure sparseness of representations. Similarly, $\mathcal{L}_2$ is sometimes (however nowadays quite rarely)
applied to weights in order to prevent them from growing to infinity.
In this section we show that -- despite their regression roots  -- they still have reasonable probabilistic interpretation for classification and can be used as a main classification objective.

We use the following notation: $\{(\mathbf{x}_i,\mathbf{y}_i)\}_{i=1}^N \subset \mathbb{R}^d \times \{0,1\}^K$ is a training set, an iid sample from unknown $P(\mathbf{x},\mathbf{y})$ and $\sigma$ denotes a function producing probability estimates (usually sigmoid or softmax).

\begin{prop}
\label{prop:l1exp}
$\mathcal{L}_1$ loss applied to the probability estimates $\hat p(\vectorfmt{y}|\vectorfmt{x})$ leads to minimisation of expected misclassification probability (as opposed to maximisation of fully correct labelling given by the log loss). Similarly $\mathcal{L}_2$ minimises the same factor, but regularised with a half of expected squared L$_2$ norm of the predictions probability estimates. 
\end{prop}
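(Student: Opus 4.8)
The plan is to argue per example and then take expectations over $P(\mathbf{x},\mathbf{y})$, exploiting the one-hot structure of $\mathbf{y}$ together with the fact that $\sigma(\mathbf{o})$ is a probability vector (nonnegative entries summing to one). Fix a sample and let $c$ be the unique index with $\mathbf{y}^{(c)}=1$. Writing $\hat p=\sigma(\mathbf{o})$ for the estimate $\hat p(\mathbf{y}|\mathbf{x})$, the quantity $1-\hat p^{(c)}$ is exactly the probability that a label drawn from the predictive distribution differs from the true class, i.e.\ the misclassification probability at that point. Everything reduces to rewriting each loss in terms of $\hat p^{(c)}$.

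First I would handle $\mathcal{L}_1\circ\sigma$. Using $\mathbf{y}^{(c)}=1$, $\mathbf{y}^{(j)}=0$ for $j\ne c$, and $\hat p^{(j)}\ge 0$ with $\sum_j \hat p^{(j)}=1$, the absolute values collapse and $\|\mathbf{y}-\hat p\|_1 = (1-\hat p^{(c)}) + \sum_{j\ne c}\hat p^{(j)} = 2\,(1-\hat p^{(c)})$. Thus $\mathcal{L}_1\circ\sigma$ is twice the pointwise misclassification probability, and taking expectations shows that minimising $\mathbb{E}[\mathcal{L}_1\circ\sigma]$ coincides with minimising the expected misclassification probability (the factor two being irrelevant to the minimiser). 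For contrast, log loss evaluates to $-\sum_j \mathbf{y}^{(j)}\log\hat p^{(j)} = -\log\hat p^{(c)}$, i.e.\ minus the log-probability of drawing the entire correct label vector, so its minimisation maximises the likelihood of a \emph{fully} correct labelling; this is precisely the distinction the statement draws between the two objectives.

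Next I would treat $\mathcal{L}_2\circ\sigma$ by expanding the square and reusing $\sum_j \hat p^{(j)}=1$, giving $\|\mathbf{y}-\hat p\|_2^2 = (1-\hat p^{(c)})^2 + \sum_{j\ne c}(\hat p^{(j)})^2 = 1 - 2\hat p^{(c)} + \|\hat p\|_2^2$. Rearranging yields $\tfrac12\,\mathcal{L}_2\circ\sigma = (1-\hat p^{(c)}) + \tfrac12\|\hat p\|_2^2 - \tfrac12$, so up to the additive constant $-\tfrac12$ the halved $\mathcal{L}_2$ loss is the misclassification probability plus $\tfrac12\|\hat p\|_2^2$. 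Taking expectations identifies the extra term as half the expected squared $L_2$ norm of the probability estimates, which is exactly the claimed regulariser, and the constant does not affect the minimiser.

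The main obstacle is not the algebra but making the probabilistic reading airtight. I must justify that $\sigma(\mathbf{o})$ lies on the probability simplex (true for softmax; the sum-to-one property is what produces the clean factor two in the $\mathcal{L}_1$ identity and the cancellation needed in the $\mathcal{L}_2$ one) and that $1-\hat p^{(c)}$ is legitimately the misclassification probability of the induced predictive distribution. I would also state explicitly that all three identities are per-example affine or positively scaled relations, so they survive the expectation over $P(\mathbf{x},\mathbf{y})$ and the positive scaling and additive constants leave the set of minimisers unchanged.
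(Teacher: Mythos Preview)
Your proposal is correct and follows essentially the same approach as the paper: both exploit the one-hot structure of $\mathbf{y}$ and the simplex constraint on $\hat p$ to reduce $\mathcal{L}_1\circ\sigma$ to $2(1-\hat p^{(c)})$ and $\mathcal{L}_2\circ\sigma$ to $1-2\hat p^{(c)}+\|\hat p\|_2^2$, then take expectations. The only cosmetic difference is that the paper keeps the inner product $\sum_j \mathbf{y}_i^{(j)}\mathbf{p}_i^{(j)}$ throughout rather than naming the correct index $c$, but since this equals $\hat p^{(c)}$ the computations are identical.
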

\begin{proof}
 In $K$-class classification dependent variables are vectors $\vectorfmt{y}_i \in \{0,1\}^K$ with $\mathrm{L}_1(\vectorfmt{y}_i)=1$, thus using notation 
 $\vectorfmt{p}_i = \hat p(\vectorfmt{y}|\vectorfmt{x}_i)$
 \begin{equation*}
  \begin{aligned}
\mathcal{L}_1
 &= \tfrac{1}{N}\sum\nolimits_i \sum\nolimits_j | {\vectorfmt{p}_i^{(j)}} - {\vectorfmt{y}_i^{(j)}} |
 = \tfrac{1}{N}\sum\nolimits_i \left [ \sum\nolimits_j {\vectorfmt{y}_i^{(j)}}(1-{\vectorfmt{p}_i^{(j)}}) + (1-{\vectorfmt{y}_i^{(j)}}){\vectorfmt{p}_i^{(j)}} \right ]\\
 &= \tfrac{1}{N}\sum\nolimits_i \left [ \sum\nolimits_j {\vectorfmt{y}_i^{(j)}}-2\sum\nolimits_j{\vectorfmt{y}_i^{(j)}}{\vectorfmt{p}_i^{(j)}} + \sum\nolimits_j{\vectorfmt{p}_i^{(j)}} \right ]
 =  2 -2\tfrac{1}{N}\sum\nolimits_i \left [\sum\nolimits_j{\vectorfmt{y}_i^{(j)}}{\vectorfmt{p}_i^{(j)}} \right ].
 \end{aligned}
 \end{equation*}
 Consequently if we sample label according to $\vectorfmt{p}_i$ then probability that it actually matches one hot encoded label in $\vectorfmt{y}_i$ equals
 $
 P(\hat l = l | \hat l \sim \vectorfmt{p}_i, l \sim \vectorfmt{y}_i) 
 = \sum\nolimits_j {\vectorfmt{y}_i^{(j)}} {\vectorfmt{p}_i^{(j)}},
 $
 and consequently
 \begin{equation*}
  \begin{aligned}   
 \mathcal{L}_1 &= 2 -2\tfrac{1}{N}\sum\nolimits_i \left [\sum\nolimits_j{\vectorfmt{y}_i^{(j)}}{\vectorfmt{p}_i^{(j)}} \right ]
 \approx - 2\mathbb{E}_{P(\vectorfmt{x},\vectorfmt{y})}\left [ P(\hat l = l | \hat l \sim \vectorfmt{p}_i, l \sim \vectorfmt{y}_i) \right ] + \text{const.}
  \end{aligned}
 \end{equation*}
Analogously for $\mathcal{L}_2,$
 \begin{equation*}
  \begin{aligned}   
\mathcal{L}_2
&=  - 2\tfrac{1}{N}\sum\nolimits_i \left [\sum\nolimits_j{\vectorfmt{y}_i^{(j)}}{\vectorfmt{p}_i^{(j)}} \right ]
 + \tfrac{1}{N}\sum\nolimits_i \mathrm{L}_2 ({\vectorfmt{y}_i} )^2
  + \tfrac{1}{N}\sum\nolimits_i \mathrm{L}_2 ({\vectorfmt{p}_i} )^2\\
 & \approx - 2\mathbb{E}_{P(\vectorfmt{x},\vectorfmt{y})}\left [ P(\hat l = l | \hat l \sim \vectorfmt{p}_i, l \sim \vectorfmt{y}_i) \right ]
 + \mathbb{E}_{P(\vectorfmt{x},\vectorfmt{y})}[\mathrm{L}_2 ({\vectorfmt{p}_i} )^2] + \text{const.}
  \end{aligned}
 \end{equation*}
\end{proof}
For this reason we  refer to these losses as \emph{expectation loss} and  \emph{regularised expectation loss} respectively.
One could expect that this should lead to higher robustness to the outliers/noise, as we try to maximise the expected probability of good classification as opposed to the probability of completely correct labelling (which log loss does). Indeed, as we show in the experimental section -- this property is true for all losses sharing connection with \emph{expectation losses}.
%

%

So why is using these two loss functions unpopular? Is there anything fundamentally wrong with this formulation from the mathematical perspective?
While the following observation is not definitive, it shows an insight into what might be the issue causing slow convergence of such methods.

\begin{prop}
\label{prop:vanish}
 $\mathcal{L}_1$, $\mathcal{L}_2$ losses applied to probabilities estimates coming from sigmoid (or softmax) 
have non-monotonic partial derivatives wrt. to the output of the final layer (and the loss is not convex nor concave wrt. to last layer weights). Furthermore, they vanish in both infinities, which slows down learning of heavily misclassified examples. 
\end{prop}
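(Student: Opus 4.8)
The plan is to reduce the whole statement to the behaviour of a single output coordinate under the sigmoid, since that is where all three phenomena already appear; the softmax case follows from the same computation carried out along one coordinate with the others held fixed. Fix a coordinate whose true label is $y=1$ (the $y=0$ case is symmetric) and write $s=\sigma(o)\in(0,1)$, so that $\sigma'(o)=s(1-s)$. The relevant summands are $1-s$ for $\mathcal{L}_1\circ\sigma$ and $(1-s)^2$ for $\mathcal{L}_2\circ\sigma$, both smooth in $o$ on this branch (fixing the label removes the absolute-value kink), so I can differentiate freely.

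First I would compute the partial derivatives with respect to $o$ via the chain rule, obtaining $-s(1-s)$ for $\mathcal{L}_1\circ\sigma$ and $-2s(1-s)^2$ for $\mathcal{L}_2\circ\sigma$. For the vanishing claim I simply take limits: as $o\to+\infty$ we have $s\to1$ and as $o\to-\infty$ we have $s\to0$, and both expressions tend to $0$ in either limit. The point I want to stress is that a heavily misclassified example ($y=1$ but $o\to-\infty$) produces a gradient of vanishing magnitude, so such examples are learned extremely slowly. This is precisely where log loss behaves differently, since its derivative $s-1$ tends to the nonzero value $-1$, a contrast I would record explicitly.

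For non-monotonicity I would view each derivative as a function $g(s)$ precomposed with the strictly increasing map $o\mapsto s=\sigma(o)$, so it suffices to exhibit an interior critical point of $g$ on $(0,1)$. Differentiating, $-s(1-s)$ is extremised at $s=\tfrac{1}{2}$ and $-2s(1-s)^2$ at $s=\tfrac{1}{3}$, so each derivative first decreases and then increases as $o$ ranges over $\mathbb{R}$; composition with a monotone map preserves this, giving non-monotonicity. The same observation settles the convexity claim: a non-monotone first derivative has a second derivative that changes sign, so the loss is neither convex nor concave as a function of $o$. Since the last-layer pre-activation $o$ is affine in the last-layer weights, restricting the loss to the line through a weight vector in the direction of the corresponding input rewrites it as $g(\sigma(\cdot))$ precomposed with an increasing affine map; as affine reparametrisation preserves both convexity and concavity, the sign change of the second derivative transfers to weight space, so the loss is neither convex nor concave in the weights.

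The computations are elementary, so the only real care needed is bookkeeping: justifying the reduction to one coordinate (handling the $\mathcal{L}_1$ absolute value by fixing the label branch), and being precise that ``neither convex nor concave in the weights'' follows from the output-space statement by affine composition rather than a fresh Hessian computation. I expect the softmax coupling to be the one place a careful reader might object, so I would state plainly that the argument is per coordinate and that the softmax cross terms affect neither the limits nor the sign change of the second derivative.
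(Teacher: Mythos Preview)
Your proposal is correct and follows essentially the same route as the paper: compute the derivative of the composed loss with respect to the pre-activation on a single positively labelled coordinate, observe it vanishes at both infinities while being strictly negative in between (yielding both the vanishing and non-monotonicity claims), and then transfer non-convexity to the last-layer weights via the affine relation $o=\langle\mathbf{w},\mathbf{h}\rangle+b$. Your parametrisation in terms of $s=\sigma(o)$ and the explicit treatment of $\mathcal{L}_2$ with its critical point at $s=\tfrac{1}{3}$ are slightly more detailed than the paper, which writes the derivative directly as $-e^{-o}/(e^{-o}+1)^2$ and leaves $\mathcal{L}_2$ and softmax as ``completely analogous,'' but the argument is the same.
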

\begin{proof}
Let us denote sigmoid activation as $\sigma(x) = (1+e^{-x})^{-1}$ and, without loss of generality, compute partial derivative of $\mathcal{L}_1$ when network is presented with $x_p$ with positive label. Let $o_p$ denote the output activation for this sample.
 \begin{equation*} 
 \begin{aligned}
 \frac{\partial (\mathcal{L}_1 \circ  \sigma)}{\partial o}(o_p) = \frac{\partial}{\partial o} \left ( | 1 - (1+e^{-o})^{-1} | \right )(o_p) = 
   -\frac{e^{-o_p}}{(e^{-o_p}+1)^2}\\
 \lim_{o \rightarrow -\infty} -\frac{e^{-o}}{(e^{-o}+1)^2} = 0 = \lim_{o \rightarrow \infty} -\frac{e^{-o}}{(e^{-o}+1)^2},
 \end{aligned}
\end{equation*}
 while at the same time $-\frac{e^{0}}{(e^{0}+1)^2} = -\tfrac{1}{4} < 0$, completing the proof of both non-monotonicity as well as the fact it vanishes when point is heavily misclassified. Lack of convexity comes from the same argument since second derivative wrt. to any weight in the final layer of the model changes sign (as it is equivalent to first derivative being non-monotonic). This comes directly from the above computations and the fact that $o_p = \langle \mathbf{w}, \mathbf{h}_p \rangle + b $ for some internal activation $\mathbf{h}_p$, layer weights $\mathbf{w}$ and layer bias $b$. In a natural way this is true even if we do not have any hidden layers (model is linear). Proofs for $\mathcal{L}_2$ and softmax are completely analogous. 
\end{proof}

Given this negative result, it seems natural to ask whether a similar property can be proven to show which loss functions should lead to \emph{fast} convergence.
It seems like the answer is again positive, however based on the well known deep learning hypothesis that deep models learn well when dealing with piece-wise linear functions. 
An interesting phenomenon in classification based on neural networks is that even in a deep linear model or rectifier network the top layer is often non-linear, as it uses softmax or
sigmoid activation to produce probability estimates. Once this is introduced, also the partial derivatives stop being piece-wise linear. We believe that one can achieve faster, better
convergence when we ensure that architecture together with loss function, produces a piecewise linear partial derivatives (but not constant) wrt. to final layer activations,
especially while using first order optimisation methods. This property is true only for $\mathcal{L}_2$ loss and squared hinge loss (see Figure~\ref{fig:deriviatives}) among all considered ones in this paper.
\begin{figure}[h]
 \includegraphics[width=0.375\textwidth]{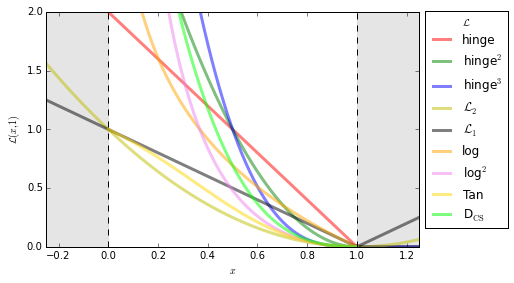}
 \includegraphics[width=0.3\textwidth]{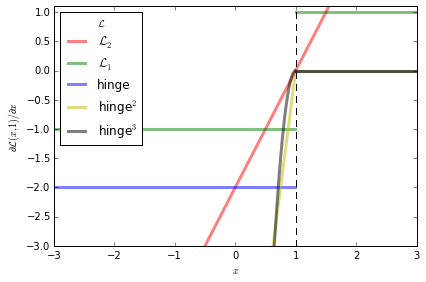}
 \includegraphics[width=0.3\textwidth]{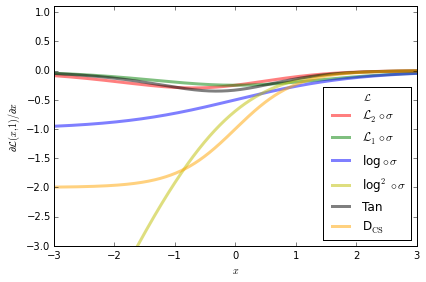}
\caption{Left: Visualisation of analysed losses as functions of activation on positive sample. Middle: Visualisation of partial derivatives wrt. to output neuron for losses based on linear output. Right: Visualisation of partial derivatives wrt. to output neuron for losses based on probability estimates.}
\label{fig:deriviatives}
\end{figure}

Finally we show relation between Cauchy-Schwarz Divergence loss and the log loss, justifying its introduction as an objective for neural nets.
\begin{prop}
Cauchy-Schwarz Divergence loss is equivalent to cross entropy loss regularised with half of expected Renyi's quadratic entropy of the predictions.
\end{prop}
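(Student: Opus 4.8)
The plan is to expand the definition of the Cauchy--Schwarz divergence loss and split the logarithm of a ratio into a sum of logarithms, after which each resulting piece should be identifiable either as the cross entropy term or as the entropy regulariser. Writing $\mathbf{p}=\sigma(\mathbf{o})$ for brevity, I start from
\[
D_{\mathrm{CS}} = -\log \frac{\sum_j \mathbf{p}^{(j)}\mathbf{y}^{(j)}}{\|\mathbf{p}\|_2\,\|\mathbf{y}\|_2}
= -\log\Big(\sum_j \mathbf{p}^{(j)}\mathbf{y}^{(j)}\Big) + \log\|\mathbf{p}\|_2 + \log\|\mathbf{y}\|_2 .
\]
This is a pure logarithm identity and needs no assumption beyond positivity of the inner product, which holds since $\mathbf{p}$ is a probability vector and $\mathbf{y}$ is nonnegative.

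The next step exploits that $\mathbf{y}$ is one-hot, which does two things at once. First, $\|\mathbf{y}\|_2=1$, so the last term vanishes. Second, if $c$ denotes the true class then $\sum_j \mathbf{p}^{(j)}\mathbf{y}^{(j)}=\mathbf{p}^{(c)}$, and moreover $-\log \mathbf{p}^{(c)} = -\sum_j \mathbf{y}^{(j)}\log\mathbf{p}^{(j)}$ because every summand but one is annihilated by the one-hot indicator; the first term is therefore exactly the log (cross entropy) loss of Table~\ref{tab:losses}. I would emphasise that this coincidence genuinely relies on the encoding: for an arbitrary probability target the quantities $-\log\langle\mathbf{p},\mathbf{y}\rangle$ and $-\sum_j \mathbf{y}^{(j)}\log\mathbf{p}^{(j)}$ differ, so this is the single place where the structure of $\mathbf{y}$ is really used. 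It then remains to read off $\log\|\mathbf{p}\|_2$. Recalling that Renyi's quadratic entropy is $H_2(\mathbf{p}) = -\log\sum_j (\mathbf{p}^{(j)})^2 = -\log\|\mathbf{p}\|_2^2$, I rewrite $\log\|\mathbf{p}\|_2 = \tfrac12\log\|\mathbf{p}\|_2^2 = -\tfrac12 H_2(\mathbf{p})$, so that pointwise $D_{\mathrm{CS}} = \text{log} - \tfrac12 H_2(\sigma(\mathbf{o}))$. Averaging over the training sample and passing to the expectation under $P(\mathbf{x},\mathbf{y})$, exactly as in the proof of Proposition~\ref{prop:l1exp}, upgrades this to the statement about expected cross entropy regularised by half the expected quadratic entropy of the predictions.

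I do not expect a serious obstacle: the argument is a logarithm identity together with bookkeeping. The only two points needing care are the ones above. One is making sure the factor of $\tfrac12$ and the sign on the entropy come out correctly -- the regulariser enters as $-\tfrac12 H_2(\sigma(\mathbf{o}))$, so minimising it favours higher-entropy, less peaked predictions and thereby tempers the cross entropy term's drive to place all mass on a single class, which is exactly the robustness behaviour anticipated for the expectation-type losses. The other is being explicit that identifying the first term with cross entropy leans on $\mathbf{y}$ being one-hot rather than a general probability vector, since otherwise the two expressions fail to agree.
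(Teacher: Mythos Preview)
Your argument is correct and follows exactly the paper's route: split the logarithm of the ratio, use the one-hot structure of $\mathbf{y}$ to drop $\log\|\mathbf{y}\|_2$ and to identify $-\log\langle\mathbf{p},\mathbf{y}\rangle$ with the cross-entropy sum, then recognise $\log\|\mathbf{p}\|_2=-\tfrac12 H_2(\mathbf{p})$. Your sign bookkeeping on the entropy term is in fact cleaner than the paper's own final line, which writes $+\tfrac{1}{2}\mathbb{E}[H_2(\mathbf{p}_i)]$ where the preceding computation actually yields a minus sign.
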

\begin{proof}
Using the fact that $\forall_i\exists!_j : {\mathbf{y}_i^{(j)}} = 1$ we get that $ \log \sum\nolimits_j {\mathbf{p}_i^{(j)}} {\mathbf{y}_i^{(j)}} = \sum\nolimits_j  {\mathbf{y}_i^{(j)}} \log {\mathbf{p}_i^{(j)}} $ as well as $\|\mathbf{y}_i\|_2 = 1$, consequently 
\begin{equation*}
\begin{aligned}
D_\mathrm{CS} & = - \tfrac{1}{N} \sum\nolimits_i \log \tfrac{\sum\nolimits_j {\mathbf{p}_i^{(j)}} {\mathbf{y}_i^{(j)}}}{\| {\mathbf{p}_i} \|_2 \| {\mathbf{y}_i} \|_2} =
- \tfrac{1}{N} \sum\nolimits_i  \log \sum\nolimits_j {\mathbf{p}_i^{(j)}} {\mathbf{y}_i^{(j)}} + \tfrac{1}{N} \sum\nolimits_i \log \| {\mathbf{p}_i} \|_2 \| {\mathbf{y}_i} \|_2 \\
&=- \tfrac{1}{N} \sum\nolimits_i \sum\nolimits_j {\mathbf{y}_i^{(j)}} \log {\mathbf{p}_i^{(j)}}  + \tfrac{1}{2N} \sum\nolimits_i \log \| {\mathbf{p}_i} \|^2_2 \approx 
\mathcal{L}_\mathrm{log} + \tfrac{1}{2}\mathbb{E}_{P(\mathbf{x},\mathbf{y})}[H_2(\mathbf{p}_i)]
\end{aligned}
\end{equation*}
\end{proof}

\section{Experiments}

We begin the experimental section with two simple 2D toy datasets. The first one is checkerboard -- 4 class classification problem where [-1,1] square is divided into 64 small squares with cyclic class assignment.
The second one, spiral, is a 4 class generalisation of the well known 2 spirals dataset. Both datasets have 800 training and 800 testing samples. We train rectifier neural network having from 0 to 5 hidden layers with 200 units in each of them. Training is performed using Adam~\cite{kingma2014adam} with learning rate of $0.00003$ for 60,000 iterations with batch size of 50 samples.
\begin{figure}[h]
\centering
 \includegraphics[width=\textwidth]{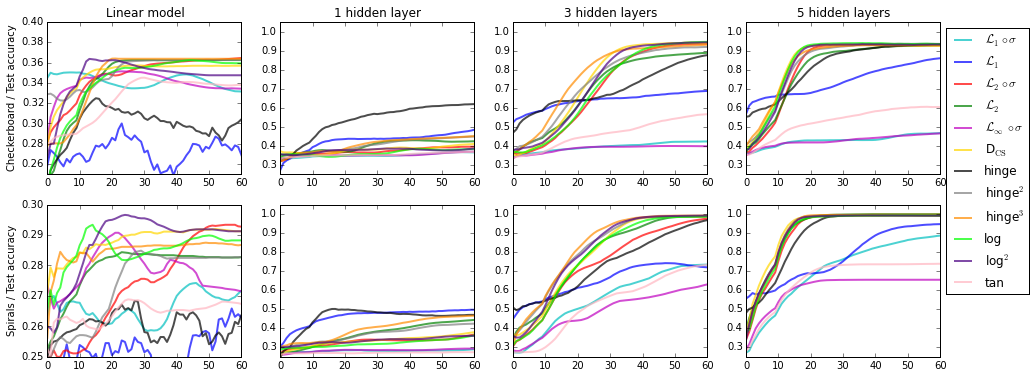}
 \includegraphics[width=0.23\textwidth]{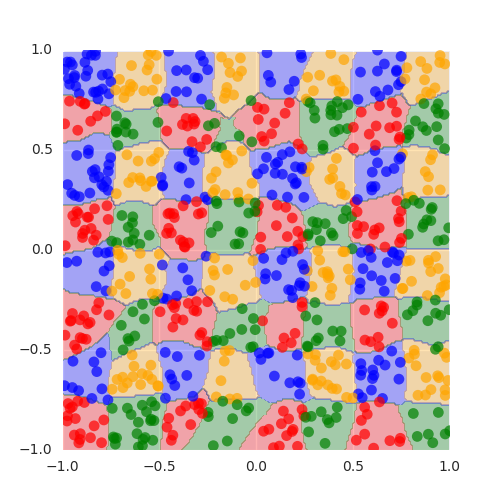}
 \includegraphics[width=0.23\textwidth]{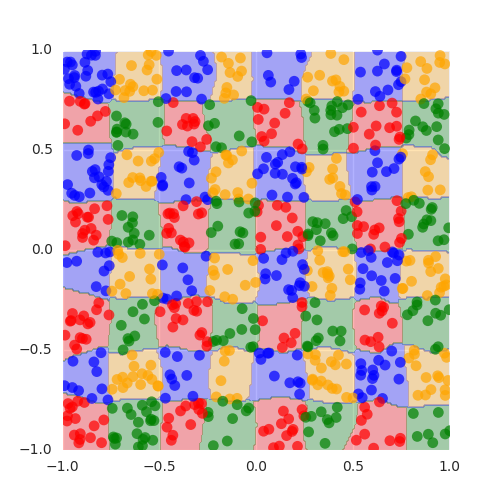}
 \includegraphics[width=0.23\textwidth]{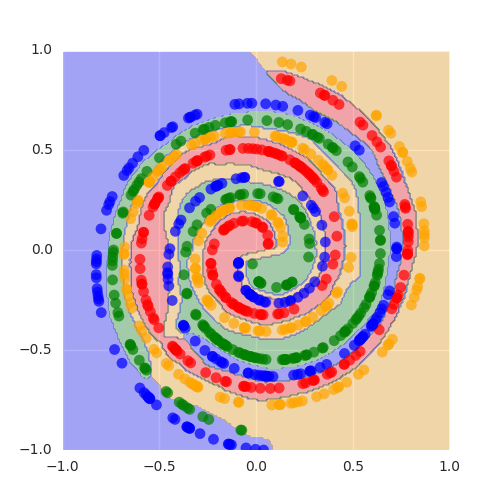}
 \includegraphics[width=0.23\textwidth]{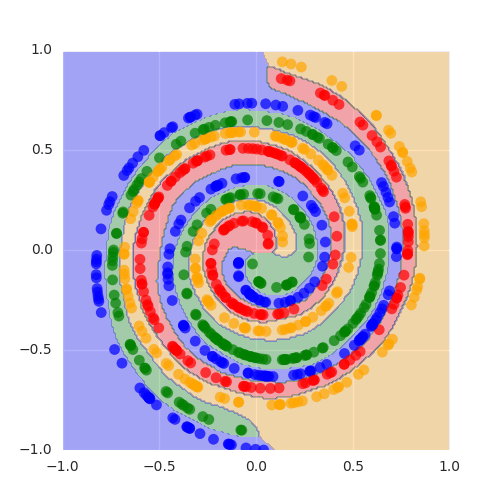}
 \caption{Top row: Learning curves for toy datasets. Bottom row: examples of decision boundaries, from left: $\mathcal{L}_1$ loss, log loss, $\mathcal{L}_1 \circ \sigma$ loss, hinge$^2$ loss.}
 \label{fig:toy_train}
\end{figure}
In these simple problems one can distinguish (Figure~\ref{fig:toy_train}) two groups of losses -- one able to fit to our very dense, low-dimensional data and one struggling 
to reduce error to 0. The second group consists of $\mathcal{L}_1$, Chebyshev, Tanimoto and expectation loss. This division becomes clear once we build a relatively
deep model (5 hidden layers), while for shallow ones this distinction is not very clear (3 hidden layers) or is even completely lost (1 hidden layer or linear model).
To further confirm the lack of ability to easily overfit we also ran an experiment in which we tried to fit 800 samples from uniform distribution over $[-1,1]$ with randomly assigned 4 labels and achieved analogous partitioning.

During following, real data-based experiments, we focus on further investigation of loss functions properties emerging after application to deep models, as well as characteristics of the created models.
In particular, we show that lack of ability to reduce training error to 0 is often correlated with robustness to various types of noise (despite not underfitting the data).

Let us now proceed with one of the most common datasets used in deep learning community -- MNIST~\cite{lecun1998mnist}. We train network consisting from 0 to 5 hidden layers, each followed by ReLU activation function and dropout~\cite{srivastava2014dropout} with 50\% probability. Each hidden layer consists of 512 neurons, and whole model is trained using Adam~\cite{kingma2014adam} with learning rate of $0.00003$ for 100,000 iterations using batch size of 100 samples. 
\begin{figure}[h]
\centering
 \includegraphics[width=\textwidth]{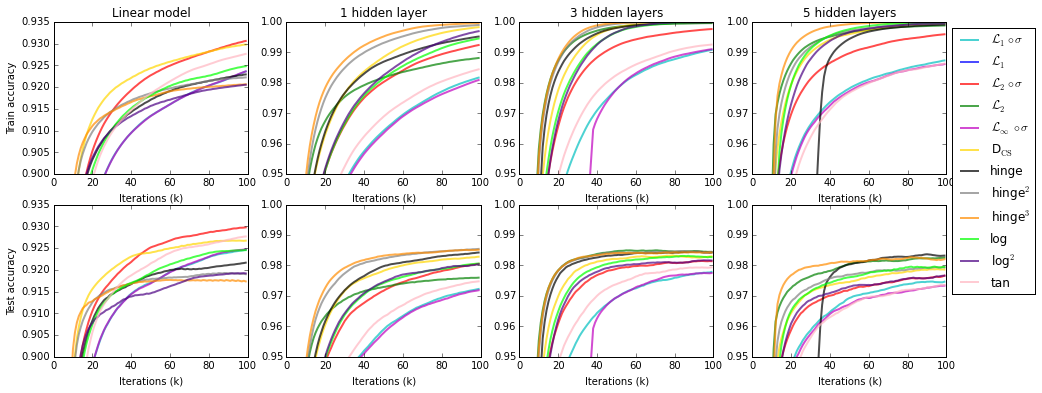}\vspace{0.25cm}
 \includegraphics[width=0.25\textwidth]{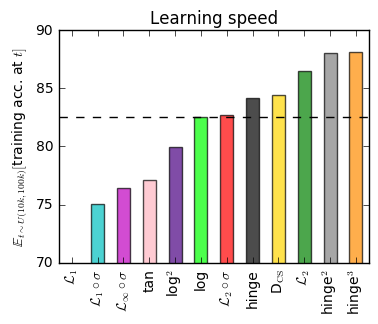}
 \includegraphics[width=0.25\textwidth]{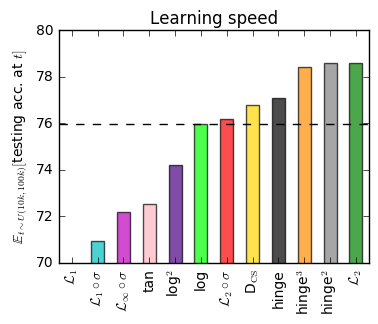}
 \includegraphics[width=0.47\textwidth]{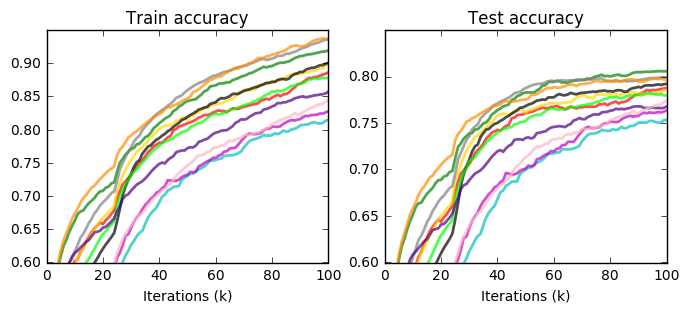}
 \caption{Top two rows: learning curves for MNIST dataset. Bottom row: (left) speed of learning expressed as expected training/testing accuracy when we sample iteration uniformly between 10k and 100k; (right) learning curves for CIFAR10 dataset.}
 \label{fig:mnist_train}
\end{figure}
There are few interesting findings, visible on Figure~\ref{fig:mnist_train}. First, results obtained for a linear model (lack of hidden layers) are qualitatively different from all the remaining ones. For example, using regularised expectation loss leads to the strongest model in terms of both training accuracy and generalisation capabilities, while the same loss function is far from being the best one once we introduce non-linearities.
This shows two important things: first -- observations and conclusions drawn from linear models do not seem to transfer to deep nets, and second -- there seems to be an interesting co-dependence between learning dynamics coming from training rectifier nets and loss functions used. As a side note, 93\% testing accuracy, obtained by $\mathcal{L}_2 \circ \sigma$ and $D_\mathrm{CS}$, is a very strong result on MNIST using linear model without any data augmentation or model regularisation.

Second interesting observation regards the speed of learning. It appears that (apart from linear models) hinge$^2$ and hinge$^3$ losses are consistently the fastest in training, and once we have enough hidden layers (basically more than 1) also $\mathcal{L}_2$. This matches our theoretical analysis of these losses in the previous section. At the same time both expectation losses are much slower to train, which we believe to be a result of their vanishing partial derivatives in heavily misclassified points (Proposition~\ref{prop:vanish}).
It is important to notice that while higher order hinge losses (especially 2$^\mathrm{nd}$) actually help in terms of both speed and final performance, the same property does not hold for higher order log losses. One possible explanation is that taking a square of log loss only reduces model's certainty in classification (since any number between 0 and 1 taken to 2$^\mathrm{nd}$ power decreases), while for hinge losses the metric used for penalising margin-outliers is changed, and both L$_1$ metric (leading to hinge) as well as any other L$_p$ norm (leading to hinge$^p$) make perfect sense.
 
Third remark is that pure $\mathcal{L}_1$ does not learn at all (ending up with ~20\% accuracy) due to causing serious ``jumps'' in the model because of its partial derivatives wrt.
to net output always being either -1 or 1. Consequently, even after classifying a point correctly, we  are still heavily penalised for it, while with losses like $\mathcal{L}_2$ the closer we are to the correct classification - the smaller the penalty is.
 
Finally, in terms of generalisation capabilities margin-based losses seem to outperform the remaining families. One could argue that this is just a result of lack of regularisation in the rest of the losses, however we underline that all the analysed networks use strong dropout to counter the overfitting problem, and that typical $\mathrm{L}_1$ or $\mathrm{L}_2$ regularisation penalties do not work well in deep networks. 

For CIFAR10 dataset we used a simple convnet, consisting of 3 layers of convolutions, each of size 5x5 and 64 filters, with ReLU activation functions, batch-normalisation and pooling operations in between them (max pooling after first layer and then two average poolings, all 3x3 with stride 2), followed by a single fully connected hidden layer with 128 ReLU neurons, and final softmax layer with 10 neurons.
As one can see in Figure~\ref{fig:mnist_train}, despite completely different architecture than before, we obtain very similar results -- higher order margin losses lead to faster training and significantly stronger models.
Quite surprisingly -- $\mathcal{L}_2$ loss also exhibits similar property. Expectation losses again learn much slower (with the regularised one -- training at the level of log loss and unregularised -- significantly worse).
We would like to underline that this is a very simple architecture, far from the state-of-the art models for CIFAR10, however we wish to avoid using architectures which are heavily overfitted to the log loss. Furthermore, the aim of this paper is not to provide any state-of-the-art models, but rather to characterise effects of loss functions on deep networks. 

As the final interesting result in these experiments, we notice that Cauchy-Schwarz Divergence as the optimisation criterion seems to be a consistently better choice than log loss. It performs equally well or better on both MNIST and CIFAR10 in terms of both learning speed and the final performance. At the same time this information theoretic measure is very rarely used in DL community, and rather exploited in shallow learning (for both classification~\cite{czarnecki2015maximum} and clustering~\cite{principe2000information}).
 
Now we focus on the impact these losses have on noise robustness of the deep nets.
We start by performing the following experiment on previously trained MNIST classifiers: we add noise sampled from  $\mathcal{N}(0, \epsilon \mathbf{I})$ to each $\mathbf{x}_i$ and observe how quickly (in terms of growing $\epsilon$) network's training accuracy drops (Figure~\ref{fig:mnist_eps}).
\begin{figure}[htb]
 \includegraphics[width=0.925\textwidth]{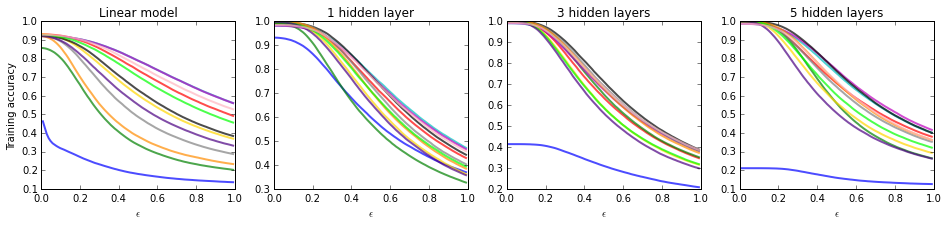}\\
 \includegraphics[width=\textwidth]{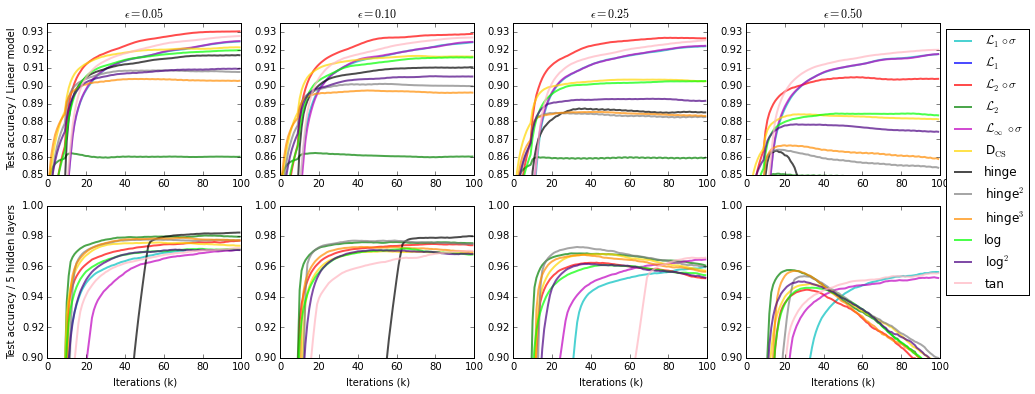}
 \caption{Top row: Training accuracy curves for the MNIST trained models, when presented with training examples with added noise from $\mathcal{N}(0, \epsilon \mathbf{I})$, plotted as a function of $\epsilon$. Middle and bottom rows: Testing accuracy curves for the MNSIT experiment with $\epsilon$ of training labels changed, plotted as a function of training iteration. If $\mathcal{L}_1 \circ \sigma$ is not visible, it is almost perfectly overlapped by $\mathcal{L}_\infty \circ \sigma$.}
 \label{fig:mnist_eps}
\end{figure}
The first crucial observation is that both expectation losses perform very well in terms of input noise robustness. We believe that this is a consequence of what Proposition~\ref{prop:l1exp} showed about their probabilistic interpretation -- that they lead to minimisation of the expected misclassification, which is less biased towards outliers than log loss (or other losses that focus on maximisation of probability of correct labelling of all samples at the same time). For log loss a single heavily misclassified point has an enormous impact on the overall error surface, while for these two losses -- it is minor.
Secondly, margin based losses also perform well on this test, usually slightly worse than the expectation losses, but still better than log loss. This shows that despite no longer maximising the misclassification margin while being used in deep nets -- they still share some characteristics with their linear origins (SVM).
In another, similar experiment, we focus on the generalisation capabilities of the networks trained with increasing amount of label noise in the training set (Figure~\ref{fig:mnist_eps}) and obtain analogous results, showing that robustness to the noise of expectation and margin losses is high for both input and label noise for deep nets, while again -- slightly different results are obtained for linear models, where log loss is more robust than the margin-based ones.
What is even more interesting, a completely non-standard loss function -- \emph{Tanimoto loss} -- performs extremely well on this task. We believe that its exact analysis is one of the important future research directions.

\section{Conclusions}
This paper provides basic analysis of effects the choice of the classification loss function has on deep neural networks training as well as their final characteristics. We believe the obtained results will lead to a wider adoption of various losses in DL work -- where up till now log loss is unquestionable favourite.

In the theoretical section we show that, surprisingly, losses which are believed to be applicable mostly to regression, have a valid probabilistic interpretation when applied to deep network-based classifiers. We also provide theoretical arguments explaining why using them might lead to slower training, which might be one of the reasons DL practitioners have not yet exploited this path.
Our experiments lead to two crucial conclusions. First, that intuitions drawn from linear models rarely transfer to highly-nonlinear deep networks. Second, that depending on the application of the deep model -- losses other than log loss are preferable. In particular, for purely accuracy focused research, squared hinge loss seems to be a better choice at it converges faster as well as provides better performance. It is also more robust to noise in the training set labelling and slightly more robust to noise in the input space. However, if one works with highly noised dataset (both input and output spaces) -- the expectation losses described in detail in this paper -- seem to be the best choice, both from theoretical and empirical perspective. 

At the same time this topic is far from being exhausted, with a large amount of possible paths to follow and questions to be answered. In particular, non-classical loss functions such as Tanimoto loss and Cauchy-Schwarz Divergence are worth further investigation.

\bibliographystyle{plain}

\end{document}